\definecolor{cP2P}{RGB}{0,90,181}       
\definecolor{cAccel}{RGB}{200,50,50}     
\definecolor{cSAC}{RGB}{0,148,115}       
\definecolor{cPPO}{RGB}{190,120,20}      
\theoremstyle{plain}
\newtheorem{theorem}{Theorem}[section]
\theoremstyle{definition}
\newtheorem{definition}[theorem]{Definition}
\newtheorem{assumption}[theorem]{Assumption}
\theoremstyle{remark}
\renewcommand{\Pr}[1]{{\mathbf{Pr}\left[#1\right]}}
\renewcommand{\vec}[1]{\mathbf{#1}}
\title{MARBLE: Multi-Armed Restless Bandits in Latent Markovian Environment}
\author{Mohsen Amiri, Konstantin Avrachenkov, Ibtihal El Mimouni, Sindri Magn\'usson
\thanks{M. Amiri and S. Magn\'usson are with the Department of Computer and Systems Sciences (DSV), Stockholm University, Stockholm, Sweden (e-mail: mohsen.amiri@dsv.su.se; sindri.magnusson@dsv.su.se). K. Avrachenkov is with Inria Sophia Antipolis, Biot, France (e-mail: k.avrachenkov@inria.fr). I. El Mimouni is with Inria Sophia Antipolis, Biot, France, and Smartprofile, Valbonne, France (e-mail: ibtihal.el-mimouni@inria.fr).}
\thanks{This work was partially supported by the Swedish Research Council under grant 2020-03607 and, in part, by Sweden's Innovation Agency (Vinnova).}
}
\date{February 2025}
\begin{document}

\maketitle

\begin{abstract}
Restless Multi-Armed Bandits (RMABs) are powerful models for decision-making under uncertainty, yet classical formulations typically assume fixed dynamics, an assumption often violated in nonstationary environments. We introduce MARBLE (Multi-Armed Restless Bandits in a Latent Markovian Environment), which augments RMABs with a latent Markov state that induces nonstationary behavior. In MARBLE, each arm evolves according to a latent environment state that switches over time, making policy learning substantially more challenging. We further introduce the Markov-Averaged Indexability (MAI) criterion as a relaxed indexability assumption and prove that, despite unobserved regime switches, under the MAI criterion, synchronous Q-learning with Whittle Indices (QWI) converges almost surely to the optimal Q-function and the corresponding Whittle indices. We validate MARBLE on a calibrated simulator-embedded (digital twin) recommender system, where QWI consistently adapts to a shifting latent state and converges to an optimal policy, empirically corroborating our theoretical findings.
\end{abstract}

\section{Introduction}

Resource allocation under uncertainty is a foundational problem across wireless networks, healthcare, and targeted social programs \cite{neely2010stochastic,zhao2022link,burdett2023stochastic,jamil2022resource}. The Restless Multi-Armed Bandit (RMAB) offers a powerful abstraction for these applications so that at each decision epoch, a controller selects $M$ of $N$ arms subject to a budget \cite{whittle1988restless}. Unlike classical bandits, arm states evolve even when passive, making optimal control computationally intractable (PSPACE-hard) \cite{papadimitriou1999complexity,akbarzadeh2022conditions}. A seminal advance by Whittle \cite{whittle1988restless} proposed an index policy that ranks arms by a problem-dependent priority; activating the top $M$ indices yields a scalable heuristic that is asymptotically optimal in various regimes \cite{weber1990index}.

In recent years, model-free methods for learning the Whittle index have experienced rapid growth. Early work on learning index policies includes \cite{duff1995q} for the Multi-Armed Bandit (MAB) problem for learning Gittins indices \cite{gittins2011multi}. For Whittle indices, the study in \cite{fu2019towards} proposed a non-convergent method. In contrast, the contribution in \cite{gibson2021novel} iteratively learns a Whittle-like policy, but it is tabular and does not scale. Foundational results encompass indexability for dynamic channel access \cite{liu2010indexability}, applications to stochastic scheduling \cite{hsu2018age}, and threshold indexability with practical heuristics \cite{nino2020verification}. Under a time-average criterion, the study in \cite{avrachenkov2022whittle} provided a convergent tabular algorithm, which was extended to Neural Networks (NN) in \cite{pagare2023full} and further analyzed for finite time in \cite{xiong2023finite}. Other directions include a Q-learning Lagrangian method for multi-action RMABs \cite{killian2021q} and the NN-based NeurWIN algorithm \cite{nakhleh2021neurwin}, which maximizes discounted returns without guaranteeing convergence to true Whittle indices. Recent model-free methods estimate Whittle indices from data using two-timescale stochastic approximation, thus avoiding explicit system identification \cite{avrachenkov2022whittle,robledo2024tabular}.

Classical formulations typically assume stationary dynamics, an assumption that often fails in practice \cite{li2022efficient}. This motivates learning-based RMAB approaches that allow initially changing dynamics \cite{besbes2014stochastic,shisher2025online,gafni2022restless}. Beyond stationarity, prior work includes nonstationary bandits under a variation-budget model with optimal regret \cite{li2022efficient,besbes2014stochastic}, globally modulated RMABs with regime-aware indices \cite{gafni2022restless}, and Whittle-style online index learning, using windowing or change detection, with provable dynamic-regret guarantees \cite{shisher2025online}.

Existing studies on changing dynamics often assume prior knowledge of when and how the dynamics change or that the relevant context is observable. In RL, these assumptions frequently fail: the dynamics may be hidden and unrecoverable from observations \cite{amiri2025reinforcement,amiri2024convergence}. In such cases, neither the context nor the environment's dynamics can be learned, raising the question of whether classical learning-based RL methods still converge. In \cite{amiri2024convergence}, the RL problem considers a scenario where the reward function varies at each iteration while the transition probabilities remain stationary. By contrast, \cite{amiri2025reinforcement} examines a more general setting in which the entire dynamics change abruptly, governed by an unobserved Markovian latent state. Therefore, the same question remains open for learning-based RMAB methods under nonstationarity driven by an unobserved Markovian latent state, which is a gap this work addresses.

We introduce MARBLE (Multi-Armed Restless Bandit in a Latent Markovian Environment), a framework that models nonstationary dynamics with abrupt regime changes via a latent Markov state. Furthermore, we introduced Markov-Average Indexability (MAI) as a relaxed indexability assumption and studied the synchronous Q-learning \cite{avrachenkov2022whittle, borkar2008stochastic,lakshminarayanan2017stability} with Whittle Index (QWI) under the MARBLE framework. Besides, we provided theoretical guarantees showing that under the MAI criterion, synchronous QWI converges to the optimal Q-table and, in turn, to the optimal Whittle indices, thereby attaining the optimal policy \cite{avrachenkov2022whittle}. Finally, we validate MARBLE on a recommender-system task, in which the platform maintains a calibrated simulator (digital twin) enabling full-sweep, planning-style synchronous updates, demonstrating empirically that synchronous QWI converges to an optimal policy in this setting. 

\section{Preliminaries}\label{sec:preliminaries}

\subsection{The restless multi–armed bandit}\label{subsec:rb}
A Restless bandit extends the classical bandit problem~\cite{lattimore2020bandit} in two key ways: all arms evolve according to Markovian dynamics regardless of selection; and budget constraints limit the number of arms activated per timestep~\cite{whittle1988restless,guha2010approximation}. Let $\mathcal N=\{1,\dots,N\}$ index $N$ independent arms. Arm \(i\) has finite states \(\mathcal{S}\) and actions \(\mathcal{A}=\{0,1\}\) (1 active, 0 passive); at time \(k\in\mathbb{N}\), if \(S_k^i=s\) and \(A_k^i=a\), it yields reward \(\vec r^{\,i}(s,a)\) and transitions to \(s'\) with probability \(p^{\,i}(s'\mid s,a)=\Pr{S_{k+1}^i=s'\mid S_k^i=s, A_k^i=a}\). An instantaneous budget enforces that exactly $M$ arms are active each period, i.e., $\sum_{i=1}^{N} A_k^i= M,$ where $1\le M<N$. For a discount factor $\gamma\in(0,1)$, the control objective is to find a policy $\pi$ maximizing
\begin{align}
\max_{\pi}\;
\mathbb E\!\left[\sum_{i=1}^N\sum_{k=0}^{\infty} \gamma^k R^i_k\right],
\label{Equ:RB-obj}
\end{align}
where $R^i_k=\vec r^{i}\!\left(S_k^i,A_k^i \right)$ . However, Whittle's method \cite{whittle1988restless,liu2010indexability} modifies the constraint so that it is only enforced in expectation. Specifically, the constraint is relaxed to $\mathbb{E}\!\Bigl[\sum_{i=1}^N\sum_{k=0}^{\infty}\gamma^k A_{k}^{i}\Bigr] \;\le\; \frac{M}{1-\gamma}$. Introducing a Lagrangian multiplier $\lambda$, we convert the constrained problem in Eq.~\eqref{Equ:RB-obj} into the following unconstrained formulation:
\begin{align}\label{Equ:RB-obj-relaxed} 
\max_{\pi}\;\mathbb E\left[\sum_{i=1}^N\sum_{k=0}^{\infty} \gamma^k \hat R^i_k\right].
\end{align}
where $\hat R^i_k = R^i_k + \lambda \left(1 -  A_{k}^{i}\right)$. Hence, the Q-function under any stationary policy $\pi:\mathcal S\!\to\!\mathcal A$ and Lagrangian multiplier $\lambda$ are defined as follows:
\begin{align}\label{eq:value-Q-function}
&\vec Q(s,a)=\mathbb E\!\left[\sum_{i=1}^N \sum_{k=0}^{\infty}\gamma^k \hat R^i_k\middle|X^i_0=(s,a) \right],  
\end{align}
where $X^i_k = (S^i_k, A_k^i)$ . Due to Whittle's relaxation, specifically the relaxed constraint, the Q-function decomposes into a superposition of the per-arm Q-functions for each arm $i$ as $\vec Q(s,a)= \sum_{i=1}^N \vec Q^i_{\lambda}(s,a)$, where $\vec Q^i_{\lambda}(s,a)$ is Q-function of each arm $i$ that are defined as follows:
\begin{align}\label{eq:value-Q-function-arm-i}
&\vec Q^i_{\lambda}(s,a)=\mathbb E\!\left[\sum_{k=0}^{\infty}\gamma^k \hat R^i_k\middle|X^i_0=(s,a) \right],  
\end{align}
Under the relaxed problem in Eq.~\eqref{Equ:RB-obj-relaxed}, the decoupling of the Q-function implies that each arm $i$ can be analyzed via a single--arm MDP with a subsidy for passivity $\lambda$. For a given $\lambda$, define the passive region as $\mathcal P^i(\lambda)=\{s\in\mathcal S:\vec Q^{i}_\lambda(s,0) \ge \vec Q^{i}_\lambda(s,1)\}$.
 
\begin{definition}[Indexability]
Arm $i$ is \emph{indexable} if $\lambda\mapsto\mathcal P^i(\lambda)$ is monotonically increasing (by inclusion), with $\mathcal P^i(\lambda)=\varnothing$ for $\lambda\ll 0$ and $\mathcal P^i(\lambda)=\mathcal S$ for $\lambda\gg 0$.
\end{definition}
 
When arm \(i\) is indexable, the Whittle index \(\lambda^{i}_{\star}(s)\) at state \(s\) is defined as the smallest subsidy that makes passivity optimal at \(s\): \(\lambda^{i}_{\star}(s):=\inf\{\lambda\in\mathbb{R}:\; s\in\mathcal{P}^{i}(\lambda)\}\). The index $\lambda^i_\star(s)$ quantifies the urgency of activating arm $i$ in state $s$: the larger the index, the larger the subsidy one would need in order to justify keeping the arm passive in that state. Given a system state $(S_k^1,\dots,S_k^N)$ at time $k$, the Whittle index policy activates the $M$ arms with the largest indices $\lambda^i_\star(S_k^i)$.
 
To compute the Whittle index at a reference state $z \in \mathcal{S}$, we fix the Lagrangian multiplier $\lambda^i(z)$ for arm $i$ and write the corresponding Bellman equation:
\begin{align}\label{eq:value-Q-function-arm-i-relation}
&\vec Q^{i,z}_{\lambda}(s,a)
= \vec r^{i}\!\left(s,a \right) + \lambda^i(z) (1-a) \;\notag \\
&+\; \gamma \sum_{s' \in \mathcal{S}} p^{i}(s'|s,a)\max_{a'\in\mathcal A} \vec Q^{i,z}_{\lambda}(s',a').
\end{align}
The Whittle index $\lambda^i_\star(z)$ is then equivalently the value of $\lambda^i(z)$ that makes the two actions indifferent at the reference state, i.e., $\vec Q^{i,z}_{\lambda_\star}(z,1) = \vec Q^{i,z}_{\lambda_\star}(z,0)$.

\subsection{Synchronous Q–learning with Whittle index (QWI)}\label{subsec:QWI} Under indexability, QWI estimates each arm’s Whittle index $\lambda_\star^i(\cdot)$ by learning the subsidy that equalizes actions \cite{robledo2024tabular}. We keep a state map $\lambda:\mathcal S\to\mathbb R$ and per-arm tabular values $\vec Q_k^{\,i}$ for the relaxed single-arm return (Eq.~\eqref{Equ:RB-obj-relaxed}).

On the fast timescale, for each arm $i$ at reference state $z$ with current estimate $\lambda_k^i$, and using the calibrated simulator $\mathcal{G}_i(\theta)$ (treating $\theta$ as known), for the trajectory $(S_k^i, A_k^i, R_k^i, S_{k+1}^i)$ we have:
\begin{align}\label{Equ:QWI-QL-update}
&\vec Q_{k+1}^{i,z}(S^i_k,A^i_k)
=(1-\alpha_k)\vec Q_{k}^{i,z}(S^i_k,A^i_k) \nonumber\\
&\quad+\alpha_k\!\left(R_k^i+\lambda_k^i(z)(1-A^i_k)+\gamma\max_{a\in\{0,1\}}\vec Q_{k}^{i,z}(S^i_{k+1},a)\right),
\end{align}
and subsequently, for the slow timescale, we have:
\begin{align}\label{Equ:QWI-index-update}
\lambda^{i}_{k+1}(z)=\lambda^{i}_k(z)+\beta_k\!\left(\vec Q_{k+1}^{i,z}(z,1)-\vec Q_{k+1}^{i,z}(z,0)\right),
\end{align}
which is driving the action gap to $0$.  We assume generative access to a calibrated simulator (digital twin) $\mathcal{G}_i(\theta)$ that returns the environment–averaged transition kernel $p_{\mathcal E}^i(s'\mid s,a;\theta)$ ($\theta$ is known). At each synchronous iteration $k$, and for every state–action pair $(s,a)$, we sample a Monte-Carlo next state $s' \sim p_{\mathcal E}^i(\cdot\mid s,a;\theta)$. The environment then emits a one–step reward $R_k^i \;=\; r_{E_k}^i(s,a)$, where $E_k$ is the latent (unobserved) environment state at time $k$; we assume the environment shows $R_k^i$ (but not $E_k$) for all state-action pairs.


\begin{assumption}\label{assump:stepsizes-timescale}
Let $\{\alpha_k\}$ and $\{\beta_k\}$ be deterministic stepsize sequences with $\sum_{k=0}^\infty \alpha_k=\sum_{k=0}^\infty \beta_k=\infty$, $\sum_{k=0}^\infty \alpha_k^2<\infty$, $\sum_{k=0}^\infty \beta_k^2<\infty$, and $\beta_k=o(\alpha_k)$ as $k\to\infty$.
\end{assumption}

Algorithm~\ref{Alg:QWI} represents a practical tabular implementation of synchronous QWI. At each outer iteration $k$, assuming generative access to a calibrated simulator $\mathcal{G}_i(\theta)$, for all simulated state-action pair we use Eq.~\eqref{Equ:QWI-QL-update} for every arm $i$. On the slower timescale, update the index estimate at the reference state $z$ according to Eq.~\eqref{Equ:QWI-index-update}. For online execution at time $k$, observe all arm states $S^i_k$, form index estimates $\lambda^i_k$, and select a joint action $A^i_k$ that activates exactly $M$ arms: with probability $1-\varepsilon$, activate the $M$ arms with largest $\lambda^i_k$; with probability $\varepsilon$, choose $M$ arms uniformly at random.



\begin{algorithm}[h]
\caption{Tabular Synchronous QWI}\label{Alg:QWI}
\begin{algorithmic}[1]
\Require  $M$, $\gamma$, $\alpha_k$, $\beta_k$, $\mathcal{G}_i(\theta)$, $\varepsilon$
\State Initialize the Q-tables and Whittle indices.
\For{$k=0,1,2,\ldots$}
  \For{each arm $i \in \mathcal{N}$} 
    \State Update $\vec Q_k^{\,i,z}$ for all $(s,a,s') \sim  \mathcal{G}_i(\theta)$ using \eqref{Equ:QWI-QL-update} 
    \State Update $\lambda^{\,i}(z)$ using \eqref{Equ:QWI-index-update}.
  \EndFor
\EndFor
\end{algorithmic}
\end{algorithm}

\section{MARBLE: Model and Lagrangian Decomposition}\label{sec:marble}

We now introduce the MARBLE model, which augments RMAB with an unobservable environment that modulates rewards and transitions. Let the latent environment state \(E_k\) take values in a finite set \(\mathcal{E}\) and evolve as a Markov chain with transition matrix \(H(e'\mid e)=\Pr{E_{k+1}=e'\mid E_k=e}\), where both the environment states \(E_k\) and \(H(\cdot)\) are unobservable.

\begin{assumption}\label{assump:A1}
The latent chain $\{E_k\}$ is irreducible and aperiodic with unique stationary distribution $\vec\mu_{\mathcal E}$. 
\end{assumption}

\begin{definition}[MARBLE]\label{Def:MARBLE}
A MARBLE instance is the tuple \(\bigl(\mathcal N,\mathcal S,\{\vec r_e^{\,i}\}_{i \in \mathcal{N},e \in \mathcal{E}},\{p_e^{\,i}\}_{i \in \mathcal{N},e \in \mathcal{E}},M;\mathcal E,H\bigr)\), where, for each arm \(i\in\mathcal N\) and environment state \(e\in\mathcal E\), \(p_e^{\,i}(s'\mid s,a)=\Pr{S_{k+1}^i=s'\mid S_k^i=s,\,A_k^i=a,\,E_k=e}\) and \(\vec r_e^{\,i}(s,a)=\vec r^{\,i}(s,a,e)\).
\end{definition}

In this setting, since  the  Q-function depends on the environment state, for a stationary policy $\pi$ and discount factor $\gamma \in (0,1)$ we define

\begin{align}\label{Equ:MARBLE-V-Q-F-raw}
\vec{Q}(s,a;e)
&= \mathbb{E}\!\left[\sum_{k=0}^{\infty}\sum_{i=1}^N\gamma^k R_k^i\;\middle|\; Y_0=(s,e,a)\right]. 
\end{align}
where $Y_k=(S^i_k,E_k,A^i_k)$. By relaxing the instantaneous budget, the Q-function in Eq.~\eqref{Equ:MARBLE-V-Q-F-raw} yield the Lagrangian with passivity subsidy $\lambda$:

\begin{align}\label{Equ:MARBLE-V-Q-F-raw-Const}
\vec{Q}(s,a;e)
&= \mathbb{E}\!\left[\sum_{k=0}^{\infty}\sum_{i=1}^N\gamma^k\hat R^i_k\;\middle|\; Y_0=(s,e,a)\right].
\end{align}
Thus, the Q-function decomposes as a sum over arms, with each term given by the per-arm Q-functions for arm \(i\): \(\vec Q(s,a;e)=\sum_{i=1}^{N}\vec Q^{i}_{\lambda}(s,a;e)\), where \(\vec Q^{i}_{\lambda}(s,a;e)\) denotes the Q-function for arm \(i\), defined as follows:

\begin{align}\label{Equ:MARBLE-V-Q-F-raw-Const-arm-i}
&\vec Q^i_{\lambda}(s,a;e)=\mathbb E\!\left[\sum_{k=0}^{\infty}\gamma^k \hat R^i_k\middle|Y_0=(s,e,a) \right]. 
\end{align}
Therefore, similar to Eq.~\eqref{eq:value-Q-function-arm-i-relation}, for reference state $z \in \mathcal{S}$ and reference environmental state $u \in \mathcal{E}$, we have ($z$ and $u$ are fixed.):
\begin{align}\label{Equ:MARBLE-V-Q-F-raw-relation}
&\vec{Q}^{i,z,u}_{\lambda}(s,a;e)
=  \vec{r}_{e}^{i}(s,a)+\lambda^i(z;u) (1-a) \notag\\
&+\gamma \sum_{s'\in\mathcal S}p_{e}^{i}(s'|s,a)\,\max_{a' \in \mathcal{A}}\vec{Q}^{i,z}_{\lambda}(s',a';e).
\end{align}
so, the Whittle index in this situation is called $\lambda_\star^i(z;u)$ so that $\vec{Q}^{i,z,u}_{\lambda_\star}(z,0;u)=\vec{Q}^{i,z,u}_{\lambda_\star}(z,1;u)$.


\begin{assumption}[Boundedness]\label{assump:bounded}
There exist finite constants \(R_{\max}\) such that, for all \(i\in\mathcal{N}\), \(s\in\mathcal{S}\), \(a\in\{0,1\}\), and \(e\in\mathcal{E}\), we have \(\|\vec r_{e}^{\,i}(s,a)\|_{\infty}\le R_{\max}\).
\end{assumption}

\begin{assumption}[Stationary Sampling]\label{assump:stationary-sampling}
The initial environment state $\tilde{E}$ and reference environment state $\tilde{U}$
are drawn from the stationary distribution $\vec\mu_{\mathcal{E}}$ (guaranteed to exist
by Assumption~\ref{assump:A1}).
\end{assumption}

Under Assumption~\ref{assump:stationary-sampling}, we define the environment-agnostic
(average) $Q$-function as
$\vec{\bar{Q}}^{i,z}_{\bar{\lambda}}(s,a)
:=\mathbb{E}_{\tilde{E},\tilde{U}\sim\vec\mu_{\mathcal{E}}}
\!\left[\vec{Q}_{\lambda}^{i,z,\tilde{U}}(s,a,\tilde{E})\right]$. Therefore, the Bellman equation for $\vec{\bar{Q}}^{i,z}_{\bar{\lambda}}(s,a)$ is as follows: 
\begin{align}\label{eq:MARBLE-bellman-opt}
&\vec{\bar{Q}}^{i,z}_{\bar{\lambda}}(s,a)
=  \vec{r_{\mathcal{E}}}^{i}(s,a)+\bar{\lambda}^i(z) (1-a)\notag\\
&+\gamma \sum_{s'\in\mathcal S} p_{\mathcal{E}}^{i}(s'|s,a)\,\max_{a' \in \mathcal{A}}\vec{\bar{Q}}^{i,z}_{\bar{\lambda}}(s',a').
\end{align}
where we define average reward as $\vec{r}_{\mathcal{E}}^{i}(s,a)=\sum_{e\in\mathcal E}\vec\mu_{\mathcal{E}}(e)\vec{r}_e^{i}(s,a)$ and average transition probability as $p_\mathcal{E}^{i}(s'\mid s,a)=\sum_{e\in\mathcal E}\vec\mu_{\mathcal{E}}(e)p_e^{i}(s'\mid s,a)$. Thus, the environment-agnostic Whittle index $\bar{\lambda}_\star^i(z)$ is the value of $\bar{\lambda}^i(z)$ that makes the two actions indifferent, i.e., $\vec{\bar{Q}}^{i,z}_{\bar{\lambda}_\star}(z,0)=\vec{\bar{Q}}^{i,z}_{\bar{\lambda}_\star}(z,1)$. Hence, at the Whittle index $\bar{\lambda}_\star^{i}(\cdot)$ for arm $i$, the optimal policy is called $\pi_\star(\cdot)$.

\begin{assumption}(Markov-Average Indexibility (MAI))\label{assump:indexability-MARBLE}
For each arm $i$, the environment-averaged single-arm subsidy problem is indexable: the passive region 
$\bar{\mathcal P}^{\,i}(\lambda):=\{z:\ \vec{\bar Q}^{i}_\lambda(z,0)\ge \vec{\bar Q}^{i}_\lambda(z,1)\}$ 
is increasing in $\lambda$, expanding from $\varnothing$ to $\mathcal S$ as $\lambda$ increases. 
\end{assumption}

Assumption MAI relaxes the standard indexability requirement: instead of assuming every environment is indexable, it requires only that the average environment be indexable.

\section{Algorithm and Main Results}\label{sec:algo-main}

In the MARBLE model, each arm’s dynamics are driven by a latent environmental state evolving as a Markov chain, introducing nonstationarity that could, a priori, hinder convergence of the synchronous QWI to the optimal Q-function and Whittle indices. However, Theorem~\ref{Thm:MARBLE-QWI} establishes that under the MAI criterion, and given calibrated simulators $\mathcal{G}_i(\theta)$ for all arms $i \in \mathcal{N}$ with known $\theta$, the synchronous QWI does converge to the optimal Q-function, and the associated Whittle indices converge to their optimal values.


\begin{theorem}\label{Thm:MARBLE-QWI}
Assume Assumptions~\ref{assump:stepsizes-timescale}, ~\ref{assump:bounded}, and \ref{assump:indexability-MARBLE} (MAI) hold, and for all arms $i \in \mathcal{N}$ the calibrated simulators $\mathcal{G}_i(\theta)$ with known $\theta$ are available. Then, under the MARBLE model, QWI in Algorithm~\ref{Alg:QWI} converges almost surely: for every arm \(i\), state \(s,z \in \mathcal{S}\), and action \(a \in \{0,1\}\), \(\vec Q_{k}^{i,z}(z,a) \to \vec{\bar{Q}}^{i,z}_{\bar{\lambda}_\star}(z,a)\) and \(\lambda^i_k(z) \to \bar{\lambda}^i_\star(z)\), where \(\bar{\lambda}^i_\star(z)\) is the root of \(\vec{ \bar{Q}}^{i}_{\bar{\lambda}_\star}(z,1)-\vec{ \bar{Q}}^{i}_{\bar{\lambda}_\star}(z,0)=0\), and \(\vec{ \bar{Q}}^{i}_{\bar{\lambda}_\star}\) is the fixed point of the single-arm Bellman operator in Eq.~\eqref{eq:MARBLE-bellman-opt}.
\end{theorem}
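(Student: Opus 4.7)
The plan is to cast QWI as a two-timescale stochastic approximation scheme with Markovian noise, and to invoke Borkar's ODE framework \cite{borkar2008stochastic}, mirroring the program of \cite{avrachenkov2022whittle,robledo2024tabular} while upgrading their i.i.d.-noise analysis to accommodate the latent chain $\{E_k\}$. With $\lambda$ quasi-static on one timescale, I would show that $\vec Q_k^{i,z}$ tracks the fixed point of the environment-averaged Bellman operator in Eq.~\eqref{eq:MARBLE-bellman-opt}; with $\vec Q_k^{i,z}$ quasi-equilibrated on the other, I would show that $\lambda_k^i(z)$ follows a scalar ODE whose globally asymptotically stable equilibrium, under MAI (Assumption~\ref{assump:indexability-MARBLE}), is $\bar\lambda_\star^i(z)$.

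For the Q-timescale step, freeze $\lambda$ and view \eqref{Equ:QWI-QL-update} as a synchronous Q-learning iteration driven by two noise components: (i) a martingale-difference term from sampling $s'\sim p_{\mathcal E}^i(\cdot\mid s,a;\theta)$ in the digital twin, and (ii) a \emph{non-martingale} Markov-noise term $r^i_{E_k}(s,a)-\vec r^{\,i}_{\mathcal E}(s,a)$ driven by the latent chain. The former is handled by standard martingale convergence machinery. For the latter, I would solve the Poisson equation $(I-H)\psi^i_{s,a}=r^i_{(\cdot)}(s,a)-\vec r^{\,i}_{\mathcal E}(s,a)$ on $\mathcal E$, which is well posed by Assumption~\ref{assump:A1} (ergodicity of $H$) and Assumption~\ref{assump:bounded} (bounded rewards). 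Summation by parts converts the cumulative Markov-noise contribution into a telescoping sum plus an $\ell^2$-summable remainder, which vanishes almost surely under Assumption~\ref{assump:stepsizes-timescale}. The limiting ODE is then the averaged Bellman flow, a sup-norm $\gamma$-contraction whose unique globally attracting equilibrium is $\vec{\bar Q}^{i,z}_{\bar\lambda}$; boundedness of the iterates follows from a standard Borkar--Meyn argument.

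For the $\lambda$-timescale step, substitute the already-established Q-limit into \eqref{Equ:QWI-index-update} to obtain the driving ODE
\begin{equation*}
\dot{\bar\lambda}^i(z)\;=\;\vec{\bar Q}^{i,z}_{\bar\lambda}(z,1)-\vec{\bar Q}^{i,z}_{\bar\lambda}(z,0).
\end{equation*}
Under MAI the passive region $\bar{\mathcal P}^{\,i}(\lambda)$ expands monotonically with $\lambda$, which forces the right-hand side to be a decreasing scalar function of $\bar\lambda$ that changes sign exactly at $\bar\lambda_\star^i(z)$. Hence this one-dimensional ODE has $\bar\lambda_\star^i(z)$ as its unique globally asymptotically stable equilibrium; stability of the iterates on a compact set, combined with continuity of $\bar\lambda\mapsto\vec{\bar Q}^{i,z}_{\bar\lambda}$, closes the two-timescale argument and yields $\lambda_k^i(z)\to\bar\lambda_\star^i(z)$ and $\vec Q_k^{i,z}(s,a)\to\vec{\bar Q}^{i,z}_{\bar\lambda_\star}(s,a)$ almost surely.

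The main obstacle is the Markovian-noise handling: a \emph{single} latent state $E_k$ drives the reward for \emph{every} $(s,a)$ pair updated synchronously at iteration $k$, so the noise coordinates are strongly cross-correlated and cannot be treated as independent. This forces the Poisson-equation argument to be carried out \emph{jointly} as a vector-valued equation $(I-H)\Psi=g$ with $g(e)_{s,a}=r^i_e(s,a)-\vec r^{\,i}_{\mathcal E}(s,a)$, and the corrector bound must be uniform across the finite index set $\mathcal S\times\{0,1\}$. Once this is in hand, the remaining ingredients (sup-norm contraction of the averaged Bellman operator and monotonicity of the action gap under MAI) are routine.
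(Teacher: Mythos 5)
Your proposal follows the same overall architecture as the paper's proof: both cast \eqref{Equ:QWI-QL-update}--\eqref{Equ:QWI-index-update} as a two-timescale stochastic approximation in Borkar's framework, both identify the fast ODE's equilibrium as the fixed point of the environment-averaged Bellman operator \eqref{eq:MARBLE-bellman-opt} via a sup-norm $\gamma$-contraction with a Lipschitz map $\lambda\mapsto\vec{\bar Q}^{i,z}_{\lambda}$, and both treat the slow recursion as a scalar ODE driven by the action gap, which under MAI changes sign exactly at $\bar\lambda^i_\star(z)$ and is therefore globally asymptotically stable (the paper uses a Lyapunov/LaSalle argument plus a Robbins--Siegmund bound for slow-iterate boundedness, and Borkar--Meyn for the fast iterates, as you do). The genuine divergence is in the noise treatment. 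The paper folds the latent-chain reward fluctuation $r^i_{E_k}(s,a)-\vec r^{\,i}_{\mathcal E}(s,a)$ into the martingale-difference term $M^{i,z}_{k+1}$ by asserting that $\mathbb{P}(E_k=e\mid\mathcal F_k)=\vec\mu_{\mathcal E}(e)$, i.e.\ that the filtration carries no information about the current latent state, so the standard martingale-noise version of the two-timescale theorem applies directly; this is the shorter route but rests on that conditional-independence claim (past rewards are functions of past latent states, so in general the conditional law of $E_k$ given $\mathcal F_k$ is a filtering distribution rather than $\vec\mu_{\mathcal E}$, and the paper does not argue this point). You instead treat the latent-driven term as genuine Markov noise and eliminate it through a vector Poisson equation $(I-H)\Psi=g$ on $\mathcal E$, well posed by Assumption~\ref{assump:A1}, with a telescoping-plus-$\ell^2$ remainder argument under Assumption~\ref{assump:stepsizes-timescale}; this is heavier machinery (Markov-noise SA in the style of Borkar's Chapter~6) but it only needs ergodicity of $H$, handles the cross-correlation of all $(s,a)$ coordinates through the single $E_k$ explicitly, and yields the same averaged ODE without the conditional-law assertion, so it is arguably the more robust justification of the same limit. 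One caveat common to both arguments: MAI as stated gives monotone expansion of the passive set, and both you and the paper upgrade this to monotonicity of the action gap $\lambda\mapsto\vec{\bar Q}^{i,z}_{\lambda}(z,1)-\vec{\bar Q}^{i,z}_{\lambda}(z,0)$ with a single zero crossing; that step is no weaker in your write-up than in the paper's.
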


\noindent\textit{Proof sketch.}
We cast QWI into Borkar's two-timescale SA framework~\cite{borkar2008stochastic}. 
Independence of the latent state $E_k$ from the arm-state filtration ensures the 
noise is a zero-mean martingale difference. The environment-averaged Bellman operator 
is a $\gamma$-contraction, giving a unique fast-ODE attractor; the MAI criterion 
makes the action gap strictly decreasing, yielding a unique slow-ODE equilibrium at 
$\bar\lambda_\star^{i}(z)$. Boundedness follows from ODE stability and the 
Robbins--Siegmund lemma.

\begin{proof} To establish the convergence of QWI, we invoke Borkar's two-timescale Stochastic Approximation (SA) theorem \cite{borkar1997stochastic, borkar2008stochastic}. For clarity and to facilitate the proof, we restate the theorem below.

\begin{theorem}[Borkar’s Two–timescale SA Theorem {\cite{borkar1997stochastic,borkar2008stochastic}}]
\label{thm:Borkar-2TSA}
Let $\{x_k\}_{k\ge 0}\subset\mathbb{R}^{d_1}$ and
$\{y_k\}_{k\ge 0}\subset\mathbb{R}^{d_2}$ satisfy the coupled recursions
\begin{align}
x_{k+1} &= x_k \;+\; a_k\!\left[\,h\!\left(x_k,y_k\right) + M_{k+1}\right], \label{eq:fast-SA}\\
y_{k+1} &= y_k \;+\; b_k\!\left[\,g\!\left(x_k,y_k\right) + N_{k+1}\right], \label{eq:slow-SA}
\end{align}
where
\begin{enumerate}[label=\bfseries(A$_\arabic*$)]
  \item\label{A1} $h:\mathbb{R}^{d_1}\!\times\!\mathbb{R}^{d_2}\to\mathbb{R}^{d_1}$ and
        $g:\mathbb{R}^{d_1}\!\times\!\mathbb{R}^{d_2}\to\mathbb{R}^{d_2}$ are
        globally Lipschitz;
  \item\label{A2} $\bigl\{\mathcal{F}_k\bigr\}_{k\ge0}$ is the canonical filtration
        with
        \[
           \mathcal{F}_k \;=\;
           \sigma\!\Bigl(x_0,y_0, \cdots, x_k, y_k,
               M_{1},N_{1},\dots,
               M_{k},N_{k}\Bigr),
        \]
        so that $(x_k,y_k)$ is $\mathcal{F}_k$–measurable for each $k$;
  \item\label{A3} $\{M_{k}\}$ and $\{N_{k}\}$ are \emph{martingale–difference} sequences
        w.r.t.\ $\{\mathcal{F}_k\}$, i.e.\
        $\mathbb{E}[M_{k+1}\mid\mathcal{F}_k]=0$, $\mathbb{E}[N_{k+1}\mid\mathcal{F}_k]=0$, and $(\exists\,C<\infty)$
        \begin{align*}
          \mathbb{E}\!\bigl[\|M_{k+1}\|^2\mid\mathcal{F}_k\bigr] & \;\le\;
          C\bigl(1+\|x_k\|^2+\|y_k\|^2\bigr), \\
          \mathbb{E}\!\bigl[\|N_{k+1}\|^2\mid\mathcal{F}_k\bigr]&\;\le\;
          C\bigl(1+\|x_k\|^2+\|y_k\|^2\bigr)
        \end{align*}
  \item\label{A4} the step–sizes follow $\sum_k a_k=\sum_k b_k=\infty$, $\sum_k a_k^{2}<\infty$, $\sum_k b_k^{2}<\infty$, and $\displaystyle\frac{b_k}{a_k}\to 0$ as $k\to\infty$.
\end{enumerate}

\noindent
Assume moreover:
\begin{enumerate}[label=\bfseries(B$_\arabic*$)]
\item\label{B1} \textbf{Fast‐ODE attractor:}
      For every fixed $y\in\mathbb{R}^{d_2}$ the ODE
      $\dot x = h(x,y)$ has a \emph{unique globally asymptotically stable}
      equilibrium $x^{\star}(y)$, and the map
      $y\mapsto x^{\star}(y)$ is Lipschitz.
\item\label{B2} \textbf{Slow‐ODE attractor:}
      Define $\bar g(y):=g\!\bigl(x^{\star}(y),y\bigr)$.
      The ODE $\dot y = \bar g(y)$ has a \emph{unique globally
      asymptotically stable} equilibrium $y^{\star}$.
\item\label{B3} \textbf{Boundedness:}
      The iterates are a.s.\ bounded, i.e.\
      $\sup_{k}\bigl(\|x_k\|+\|y_k\|\bigr)<\infty$ almost surely.
\end{enumerate}

\noindent
Then, with probability~$1$, $x_k\to x^{\star}(y^{\star})$ and $y_k\to y^{\star}$ as $k\to\infty$; hence $(x_k,y_k)\to\bigl(x^{\star}(y^{\star}),\,y^{\star}\bigr)$.

\end{theorem}

To proceed with the proof, we first recast the update rules in Eq.~\eqref{Equ:QWI-QL-update} and Eq.~\eqref{Equ:QWI-index-update} into the standard forms of Equations~\eqref{eq:fast-SA} and~\eqref{eq:slow-SA}, respectively. Specifically, for arm \( i \), given that the environment state is \( E_k \) and the reference state $z$, for all $(s,a,s') \sim \mathcal{G}_i(\theta)$ and the trajectory \( (S_k=s, A_k=a, (r_k^j)_{j \in \{0,1\}}, S_{k+1}=s') \), we define \( \Phi^i(\cdot) \) and \( M^i_{k+1} \) as follows:

\begin{align}\label{eq:Phi-M}
&(T_{\lambda_k} \vec{Q}_{k}^{i,z})(s,a) =(1-a) \left( \vec r^i_{\mathcal{E}}(s,0) + \lambda_k^i(z)  \right) + a \vec r^i_{\mathcal{E}}(s,1) \notag \\
&+ \gamma\sum_{s' \in \mathcal{S}}p_{\mathcal{E}}^{i}(s'|s,a) ~\underset{a' \in \mathcal{A}} {\max}~\vec{Q}_{k}^{i,z}(s', a') \notag\\
&M^{i,z}_{k+1}(s,a) = (1-a) \left( r_k^0-\vec r^i_{\mathcal{E}}(s,0)  \right) + a \left( r_k^1-\vec r^i_{\mathcal{E}}(s,1)\right) \notag\\
&+ \gamma \, \underset{a' \in \mathcal{A}} {\max}~\vec{Q}_{k}^{i,z}(s', a') - \gamma\sum_{s' \in \mathcal{S}}p_{\mathcal{E}}^{i}(s'|s,a) ~\underset{a' \in \mathcal{A}} {\max}~\vec{Q}_{k}^{i,z}(s', a')
\end{align} 
where $r_k^0=\vec{r}_{E_k}^{i}(s,0)$ and $r_k^1=\vec{r}_{E_k}^{i}(s,1)$. Therefore, we can rewrite Eq.~\eqref{Equ:QWI-QL-update} as:

\begin{align}\label{Equ:MARBLE-QL-update_rule_reform}
\vec{Q}_{k+1}^{i,z}(s, a) &= \vec{Q}_{k}^{i,z}(s, a) + \alpha_k \bigl((T_{\lambda_k} \vec{Q}_{k}^{i,z})(s,a) - \vec{Q}_{k}^{i,z}(s, a) \notag \\
&+ M^{i,z}_{k+1}(s,a)\bigr)
\end{align}

By comparing Eq.~\eqref{eq:fast-SA} and Eq.~\eqref{Equ:MARBLE-QL-update_rule_reform}, it is easy to verify that $a_k=\alpha_k$, $h(x_k, y_k)=(T_{\lambda_k} \vec{Q}_{k}^{i,z})(\cdot) - \vec{Q}_{k}^{i,z}(\cdot)$, where $x_k=\vec{Q}_{k}^{i,z}(\cdot)$ and $y_k=\lambda^i_{k}(z)$ are the Q-function and Whittle index estimate respectively. On the other hand, by comparing Eq.~\eqref{eq:slow-SA} and Eq.~\eqref{Equ:QWI-index-update}, we have $b_k=\beta_k$, $g(x_k, y_k)=\vec{Q}_k^{i,z}(s, 1)-\vec{Q}_k^{ i,z}(s, 0)$, and the martingale difference sequence $N_{k+1}=0$.

We first show that assumption~\ref{A1} is established. By the definition of $h(x_k, y_k)$ we have,
  \begin{align*}
      &\|h(x_k, y_k) - h(x_{k'}, y_{k'}) \|_{\infty} \\
      &= \| (T_{\lambda_k} \vec{Q}_{k}^{i,z} - T_{\lambda_{k'}} \vec{Q}_{k'}^{i,z})(s,a) - (\vec{Q}_{k}^{ i,z} - \vec{Q}_{k'}^{i,z})(s, a) \|_{\infty} \\
        & \leq \| (T_{\lambda_k} \vec{Q}_{k}^{i,z} - T_{\lambda_{k'}} \vec{Q}_{k'}^{i,z})(s,a) \|_{\infty} \\
        &+ \| (\vec{Q}_{k}^{ i,z} - \vec{Q}_{k'}^{i,z})(s, a) \|_{\infty}\\
        &\leq \|\gamma \sum_{s' \in \mathcal{S}}p_{\mathcal{E}}^{i}(s'|s,a) \underset{a' \in \mathcal{A}}{\max} ~\left( x_k -  x_k'\right)(s',a') \|_\infty  \\
        &+ (1-a) \|y_k-y_{k'}\|_{\infty} + \|x_k-x_{k'}\|_{\infty} \\
        &\leq (1-a) \|y_k-y_{k'}\|_{\infty} +(1+\gamma)\|x_k-x_{k'}\|_{\infty}
   \end{align*}
Therefore, since $\gamma \in [0,1)$ and $a \in \{0,1\}$, we can conclude that $h(x_k, y_k)$ is globally Lipschitz.

For proving the assumption \ref{A3}, we begin by showing that the sequence $\{M^{i,z}_{k+1}\}$ is a martingale‐difference sequence with uniformly bounded second moments. Then, we have,
\begin{align*}
&\mathbb{E}\bigl[M^{i,z}_{k+1}(s,a)\bigr] = (1-a) \left( \mathbb{E}\bigl[r_k^0\bigr]-\vec r^i_{\mathcal{E}}(s,0)  \right) \notag \\
&+ a \left( \mathbb{E}\bigl[r_k^1\bigr]-\vec r^i_{\mathcal{E}}(s,1)\right) + \gamma \, \mathbb{E}\bigl[\underset{a' \in \mathcal{A}} {\max}~\vec{Q}_{k}^{i,z}(s', a')\bigr] \notag \\
&- \gamma\sum_{s' \in \mathcal{S}}p_{\mathcal{E}}^{i}(s'|s,a) ~\underset{a' \in \mathcal{A}} {\max}~\vec{Q}_{k}^{i,z}(s', a')
\end{align*}
For arm $i$, $\mathbb{E}[r_k^j]=\sum_{e\in\mathcal{E}}\Pr{E_k=e\mid\mathcal{F}_k}\,\vec r_e(s,j)$ for $j\in\{0,1\}$, and $\mathbb{E}\!\left[\max_{a'\in\mathcal{A}}\vec Q_{k}^{i,z}(s',a')\right]=\sum_{s'\in\mathcal{S}}\Pr{S^i_k=s'\mid\mathcal{F}_k}\,\max_{a'\in\mathcal{A}}\vec Q_{k}^{i,z}(s',a')$. Since $S^i_k=s$ and $A^i_k=a$ are $\mathcal{F}_k$-measurable while $E_k$ is not, and by the MARBLE property, $\Pr{E_k=e\mid\mathcal{F}_k}=\Pr{E_k=e}=\mu_{\mathcal{E}}(e)$ and $\Pr{S^i_k=s'\mid\mathcal{F}_k}=\Pr{S^i_{k+1}=s'\mid S^i_k=s, A^i_k=a}$, which is $p_{\mathcal{E}}^{\,i}(s'\mid s,a)$. Hence, $\mathbb{E}[r_k^j]=\vec r^i_{\mathcal{E}}(s,j)$ and $\mathbb{E}\bigl[\underset{a' \in \mathcal{A}} {\max}~\vec{Q}_{k}^{i,z}(s', a')\bigr]=\sum_{s' \in \mathcal{S}}p_{\mathcal{E}}^{i}(s'|s,a) ~\underset{a' \in \mathcal{A}} {\max}~\vec{Q}_{k}^{i,z}(s', a')$, which they lead to $\mathbb{E}\bigl[M^{i,z}_{k+1}(s,a)\bigr]=0$.

We can write $M^{i,z}_{k+1}=U_k+V_k$ with
\begin{align*}
U_k:&=(1-A^i_k)\big(\vec r^{\,i}_{E_k}(S^i_k,0)-r^{i}_{\mathcal{E}}(S^i_k,0)\big) \\
   &+A^i_k\big(\vec r^{\,i}_{E_k}(S^i_k,1)-r^{\,i}_{\mathcal{E}}(S^i_k,1)\big), \\
V_k:&=\gamma\Big(\max_{a'}\vec Q^{i,z}_k(S^i_{k+1},a')\\
&-\sum_{s'} p^{i}_{\mathcal{E}}(s'\mid S^i_k,A^i_k)\max_{a'}\vec Q^{i,z}_k(s',a')\Big).
\end{align*}
Since $A^i_k\in\{0,1\}$, exactly one of $(1-A^i_k)$ or $A^i_k$ is $1$; thus
\begin{align*}
|U_k|&\;\le\;\big|\vec r^{\,i}_{E_k}(S^i_k,j)-r^{i}_{\mathcal{E}}(S^i_k,j)\big| \\
&\;\le\;|\vec r^{\,i}_{E_k}(S^i_k,j)|+| r^{\,i}_{\mathcal{E}}(S^i_k,j)| \;\le\; 2R_{\max}.
\end{align*}
Moreover, by the fact that $\sum_{s'}p_{\mathcal{E}}^{\,i}(s'\mid S^i_k,A^i_k)=1$,
\begin{align*}
&|V_k|\;\le\;\gamma\Big(\big|\max_{a'}\vec Q^{i,z}_k(S^i_{k+1},a')\big| \\
&+\sum_{s'} p^{\,i}_\mathcal{E}(s'\mid S^i_k,A^i_k)\big|\max_{a'}\vec Q^{i,z}_k(s',a')\big|\Big)
\;\le\; 2\gamma \|\vec Q^{i,z}_k\|_{\infty}.
\end{align*}
Hence, using $(u+v)^2\le 2(u^2+v^2)$,
\[
\big|M^{i,z}_{k+1}\big|^2
\;\le\; 2\big(U_k^2+V_k^2\big)
\;\le\; 2\big(2R^2_{\max}+2\gamma \|\vec Q^{i,z}_k\|^2_{\infty}\big)
\]
So there is a constant $C$ such that
\[
\mathbb E\!\left[\big\|M^{i,z}_{k+1}\big\|^2\ \middle|\ \mathcal F_k\right]\;\le\; C\Bigl(1+\|\vec Q^{i,z}_k\|_\infty^2+\|\lambda^{\,i}_k\|_\infty^2\Bigr)\quad a.s.
\]
Since $M_{k+1}^{i,z}$ does not depend on $\lambda_k^i$, there exists $C<\infty$ such that
$\mathbb{E}[\|M_{k+1}^{i,z}\|^2\mid\mathcal F_k]\le C\bigl(1+\|\vec Q_k^{i,z}\|^2\bigr)
\le C\bigl(1+\|\vec Q_k^{i,z}\|^2+\|\lambda_k^i\|^2\bigr)$.

Finally, the slow‐timescale noise sequence $\{N_{k+1}\}$ is identically zero, so it trivially satisfies
\(\mathbb E[N_{k+1}\mid\mathcal{F}_k]=0\) and 
\(\mathbb E[\|N_{k+1}\|^2\mid\mathcal{F}_k]=0\le C(1+\|x_k\|_\infty^2+\|y_k\|_\infty^2)\).

Combining these observations establishes that both $\{M_{k+1}\}$ and $\{N_{k+1}\}$ are martingale–difference sequences with bounded conditional second moments, hence Assumption \ref{A3} holds. Finally, if we set $\frac{b_k}{a_k}\to 0$, Assumption \ref{A4} is satisfied.


We first prove the global asymptotic stability of the \emph{fast} ODE (Assumption~B1), then of the \emph{slow} ODE (Assumption~B2).

\paragraph{Fast–ODE attractor}
Rewrite the index‐update in Eq.~\eqref{Equ:QWI-index-update} as
\begin{equation}\label{eq:lambda-rec}
\lambda^i_{k+1}(z)=\lambda^i_k(z)+\alpha_k\,\frac{\beta_k}{\alpha_k}\Bigl(\vec{Q}_k^{i,z}(z, 1)-\vec{Q}_k^{i,z}(z, 0)\Bigr)\,.
\end{equation}
Let $\tau(k)=\sum_{m=0}^k\alpha_m\,,$ for $k\ge0$, and define the continuous interpolation of the Q‐iterates on each interval $t \in [\tau(k),\tau(k+1))$ by
\begin{align*}
\vec{\hat Q}_t^{i,z}(s, a)&=\vec{Q}_{k}^{i,z}(s, a)+\frac{t-\tau(k)}{\tau(k+1)-\tau(k)}\bigl(\vec{Q}_{k+1}^{i,z}(s, a)\\
&-\vec{Q}_{k}^{i,z}(s, a)\bigr)\,, \quad \hat\lambda^i_t(z)=\lambda^i_k(z).
\end{align*}
Since $\beta_k/\alpha_k\to 0$ as $k\to\infty$, the pair $(\vec{\hat Q}_t^{i,z}(s,a),\hat\lambda_t^i(z))$ tracks the ODE $\vec{\dot Q}_t^{i,z}(s,a)=h\bigl(\vec{\hat Q}_t^{i,z}(s,a),\tilde\lambda^i(z)\bigr)$ and $\dot\lambda_t^i(z)=0$, where $\tilde\lambda^i$ is the constant limit of any convergent subsequence of ${\lambda_n^i}$. Fix any index vector $\lambda$. Since $0\le\gamma<1$, for any two functions $\vec Q_k^{i,z},\vec Q_{k'}^{i,z}$,
\begin{align*}
&\|T_{\lambda} \vec Q^{i,z}_k - T_{\lambda} \vec Q^{i,z}_{k'}\|_\infty= \\
&\gamma\bigl\|\sum_{s' \in \mathcal{E}}p_\mathcal{E}^i(s'|s,a)\bigl(\max_{a'}\vec Q^{i,z}_k(s',a') - \max_{a'}\vec Q^{i,z}_{k'}(s',a')\bigr)\bigr\|_\infty \\
&\leq \gamma\,
\bigl\|\sum_{s' \in \mathcal{E}}p_\mathcal{E}^i(s'|s,a)\max_{a'}\bigl(\vec Q^{i,z}_k(s',a') - \vec Q^{i,z}_{k'}(s',a')\bigr)\bigr\|_\infty \\
&\;\le\;
\gamma\,\|\vec Q^{i,z}_k-\vec Q^{i,z}_{k'}\|_\infty,
\end{align*}
Thus $T_\lambda$ is a $\gamma$-contraction in $|\cdot|_\infty$; by Banach’s fixed-point theorem \cite{goebel1990topics}, there is a unique fixed point $\bar{\vec Q}^{i,z}_{\lambda}$ with $\bar{\vec Q}^{i,z}_{\lambda}=T_{\lambda}\bar{\vec Q}^{i,z}_{\lambda}$, and the ODE $\vec{\dot Q}_t^{i,z}(s,a)=T_{\tilde \lambda}\vec{\hat Q}_t^{i,z}(s,a)-\vec{\hat Q}_t^{i,z}(s,a)$ converges globally to $\bar{\vec Q}^{i,z}_{\lambda}$, which it is the fixed point of Eq.~\eqref{eq:MARBLE-bellman-opt} for a specific $\lambda$.

Let \(\bar{\vec Q}^{i,z}_{\lambda}\) and \(\bar{\vec Q}^{i,z}_{\nu}\) be the unique fixed points of 
\(\,T_{\lambda}\) and \(T_{\nu}\), respectively.  Then
\begin{align*}
&\bar{\vec Q}^{i,z}_{\lambda}-\bar{\vec Q}^{i,z}_{\nu}
=T_{\lambda}\bar{\vec Q}^{i,z}_{\lambda}-T_{\nu}\bar{\vec Q}^{i,z}_{\nu}\\
&=\bigl[T_{\lambda}\bar{\vec Q}^{i,z}_{\lambda}-T_{\lambda}\bar{\vec Q}^{i,z}_{\nu}\bigr]
 \;+\;\bigl[T_{\lambda}\bar{\vec Q}^{i,z}_{\nu}-T_{\nu}\bar{\vec Q}^{i,z}_{\nu}\bigr].
\end{align*}
Taking sup-norms and using the \(\gamma\)–contraction property gives
\begin{align*}
&\|\bar{\vec Q}^{i,z}_{\lambda}-\bar{\vec Q}^{i,z}_{\nu}\|_\infty
\;\le\;
\gamma\,\|\bar{\vec Q}^{i,z}_{\lambda}-\bar{\vec Q}^{i,z}_{\nu}\|_\infty \\
&\;+\;\|T_{\lambda}\bar{\vec Q}^{i,z}_{\nu}-T_{\nu}\bar{\vec Q}^{i,z}_{\nu}\|_\infty.
\end{align*}
In the second term we have,
\begin{align*}
&\|T_{\lambda}\bar{\vec Q}^{i,z}_{\nu}-T_{\nu}\bar{\vec Q}^{i,z}_{\nu}\|_\infty
=\sup_{z,a}\bigl( (1-a)\,\bigl|\lambda^i(z)-\nu^i(z)\bigr| \bigr) \\
&\;\le\;\|\lambda^i-\nu^i\|_\infty.
\end{align*}
and therefore $\|\bar{\vec Q}^{i,z}_{\lambda}-\bar{\vec Q}^{i,z}_{\nu}\|_\infty
\le\frac{\|\lambda^i-\nu^i\|_\infty}{1-\gamma}$ that shows \(\lambda^i\mapsto \bar{\vec Q}^{i,z}_{\lambda}\) is Lipschitz with constant \(1/(1-\gamma)\). Therefore, \ref{B1} is established.

\paragraph{Slow-ODE Attractor} Fix a reference state $z\in\mathcal S$. For a scalar subsidy $\lambda$, define the action--gap at $z$ by $f_\lambda(z):=\bar{\vec Q}^{\,i,z}_{\lambda}(z,1)-\bar{\vec Q}^{\,i,z}_{\lambda}(z,0)$. Since $\bar{\vec Q}^{\,i,z}_{\lambda}$ is Lipschitz, $f_\lambda(z)$ is continuous in $\lambda$; moreover, increasing $\lambda$ favors passivity (due to the MAI in Assumption \ref{assump:indexability-MARBLE}), so $f_\lambda(z)$ is decreasing in $\lambda$. Therefore, we define the Whittle index $\bar\lambda_\star^{\,i}(z):=\inf\{\lambda:\; z\in\bar{\mathcal P}(\lambda)\}$. By continuity and monotonicity, it is possible that $f_\lambda(z)=0$. Consider the slow ODE $\dot y(z)=\bar g_{y}(z):=f_{\,y(z)}(z)$ and the Lyapunov function $V_z\bigl(y(z)\bigr)=\tfrac12\bigl(y(z)-\bar\lambda_\star^{i}(z)\bigr)^2$. Since $f_\lambda(z)$ is decreasing in $\lambda$ with a unique zero at $\bar\lambda_\star^{i}(z)$, so along trajectories $\dot V_z=(y(z)-\bar\lambda_\star^{i}(z))f_{y(z)}(z)<0$ for $y(z)\neq\bar\lambda_\star^{i}(z)$ and $\dot V_z=0$ for $y(z)=\bar\lambda_\star^{i}(z)$, so $V_z$ is decreasing; by LaSalle’s invariance principle \cite{cheng2008extension}, $y(z,t)\to\bar\lambda_\star^{i}(z)$, hence we set the Whittle index $y^\star(z):=\bar\lambda_\star^{i}(z)$. Thus, the slow dynamics are globally asymptotically stable, with the unique equilibrium at the reference state $z$ given by $\bar\lambda_\star^{i}(z)$.

\paragraph{Boundedness}
By Theorem~2.1 in \cite{borkar2000ode}, asymptotic stability of the associated ODE implies stability of the recursion.
In our case, the ODE $\dot{\vec Q}^{\,i,z}_t(s,a)$ is globally asymptotically stable. Therefore, the stochastic recursion
\eqref{Equ:MARBLE-QL-update_rule_reform} is stable and $\sup_k\|x_k\|_\infty \;<\;\infty.$

Fix a reference state $z\in\mathcal S$ and define the action gap $f(\lambda)\equiv f_\lambda(z)\;:=\;\bar{\vec Q}^{\,i,z}_{\lambda}(z,1)-\bar{\vec Q}^{\,i,z}_{\lambda}(z,0)$. By Assumption~\ref{assump:indexability-MARBLE}, $f$ is continuous and decreasing in $\lambda$, and it is zero at $\bar\lambda^{i}_\star(z)$.
Fix $\rho>0$ and the compact “target” interval $\mathcal K_z \;:=\; [\,\bar\lambda^{\,i}_\star(z)-\rho,\;\bar\lambda^{\,i}_{\star}(z)+\rho\,]$. 
By continuity and monotonicity, there exists \(\eta=\eta(\rho)>0\) such that \(\lambda\ge \bar\lambda^{\,i}_{\star}(z)+\rho \Rightarrow f(\lambda)\le -\eta\) and \(\lambda\le \bar\lambda^{\,i}_\star(z)-\rho \Rightarrow f(\lambda)\ge \eta\). Let \(g_k:=\vec Q_k^{\,i,z}(z,1)-\vec Q_k^{\,i,z}(z,0)\) be the slow drift in \(y_{k+1}=y_k+\beta_k g_k\). Since the fast recursion is stable and \(\beta_k/\alpha_k\to 0\), two–timescale tracking (\cite[Ch.~6]{borkar2008stochastic}) gives \(\varepsilon_k:=\|\vec Q_k^{\,i,z}-\bar{\vec Q}^{\,i,z}_{\,y_k}\|_\infty\to 0\) a.s., hence \(|g_k-f(y_k)|\le 2\varepsilon_k\to 0\) a.s. Therefore, there exists (random but finite) \(k_0\) such that, for all \(k\ge k_0\), \(y_k \ge \bar\lambda^{\,i}_{\star}(z)+\rho \Rightarrow g_k \le -\eta/2\) and \(y_k \le \bar\lambda^{\,i}_\star(z)-\rho \Rightarrow g_k \ge \eta/2\), i.e., the drift points inward outside \(\mathcal K_z\) with magnitude at least \((\eta/2)\). Consequently, if \(y_m>\bar\lambda^{\,i}_{\star}(z)+\rho\) for some \(m\ge k_0\) and \(d:=y_m-(\bar\lambda^{\,i}_{\star}(z)+\rho)>0\), then choosing \(n>m\) minimal with \(\sum_{k=m}^{n-1}\beta_k\ge 2d/\eta\) yields \(y_n \le y_m - (\eta/2)\sum_{k=m}^{n-1}\beta_k \le \bar\lambda^{\,i}_{\star}(z)+\rho\), so the iterate re–enters \(\mathcal K_z\) in finite time (the lower tail is symmetric). Thus every excursion outside \(\mathcal K_z\) (for \(k\ge k_0\)) returns in finite time.

Next, bound the one–step increments. Since $\sup_k\|\vec Q_k^{\,i,z}\|_\infty<\infty$, there exists $G<\infty$
with $|g_k|\le 2\|\vec Q_k^{\,i,z}\|_\infty\le G$, hence $|y_{k+1}-y_k|\;\le\;G\beta_k$. In particular, over any finite number of steps, the cumulative change of $y_k$ is finite.

To conclude boundedness, fix any $y^\dagger\in\mathcal Z_z$ (e.g.\ $y^\dagger=\bar\lambda^{\,i}_\star(z)$) and
consider the coercive Lyapunov function $V(y):=\tfrac12(y-y^\dagger)^2$. For $k\ge k_0$,
writing $g_k=f(y_k)+e_k$ with $|e_k|\le \eta/2$, we compute
\begin{align*}
&V(y_{k+1})-V(y_k)
= \beta_k (y_k-y^\dagger) g_k + \tfrac12 \beta_k^2 g_k^2 \\
&= \beta_k (y_k-y^\dagger) f(y_k) + \beta_k (y_k-y^\dagger) e_k + \tfrac12 \beta_k^2 g_k^2 \\
&\le \underbrace{\beta_k (y_k-y^\dagger) f(y_k)}_{\le 0\ \text{since $f$ is nonincreasing}}
\;+\; \tfrac{\eta}{2}\beta_k |y_k-y^\dagger|
\;+\; \tfrac12 G^2 \beta_k^2.
\end{align*}
Taking conditional expectations and using $|y_k-y^\dagger|=\sqrt{2V(y_k)}$, an application of the
Robbins--Siegmund almost-supermartingale lemma
\cite{robbins1971convergence,borkar2008stochastic} (since $\sum_k\beta_k^2<\infty$) yields that $V(y_k)$
converges a.s., hence $\sup_k V(y_k)<\infty$ a.s. Because $V$ is coercive, we obtain $\sup_k |y_k|\;<\;\infty
\quad\text{a.s.}$.

Consequently, both $\{x_k\}$ and $\{y_k\}$ are almost surely bounded, establishing
\textup{(B3)}. Together with \textup{(A1)}–\textup{(A4)} and \textup{(B1)}–\textup{(B2)}, the two–timescale
SA theorem (Theorem~\ref{thm:Borkar-2TSA}) applies and the claimed convergence follows.

\end{proof}

\section{Simulation Results}\label{sec:experiments}

\begin{figure}[htbp]
  \centering
  \subfigure[Average reward]{%
    \includegraphics[width=0.44\textwidth]{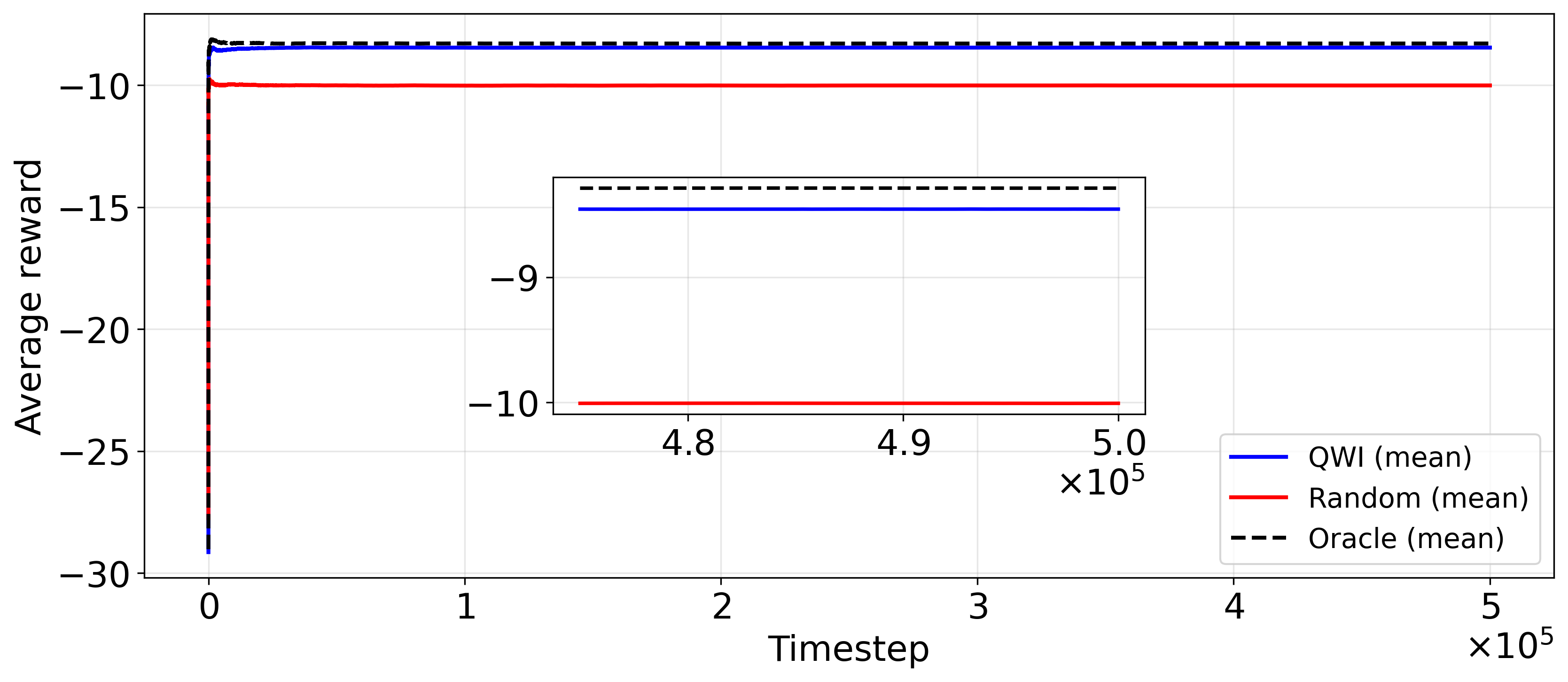}}
  \hfill
  \subfigure[Whittle index convergence]{%
    \includegraphics[width=0.44\textwidth]{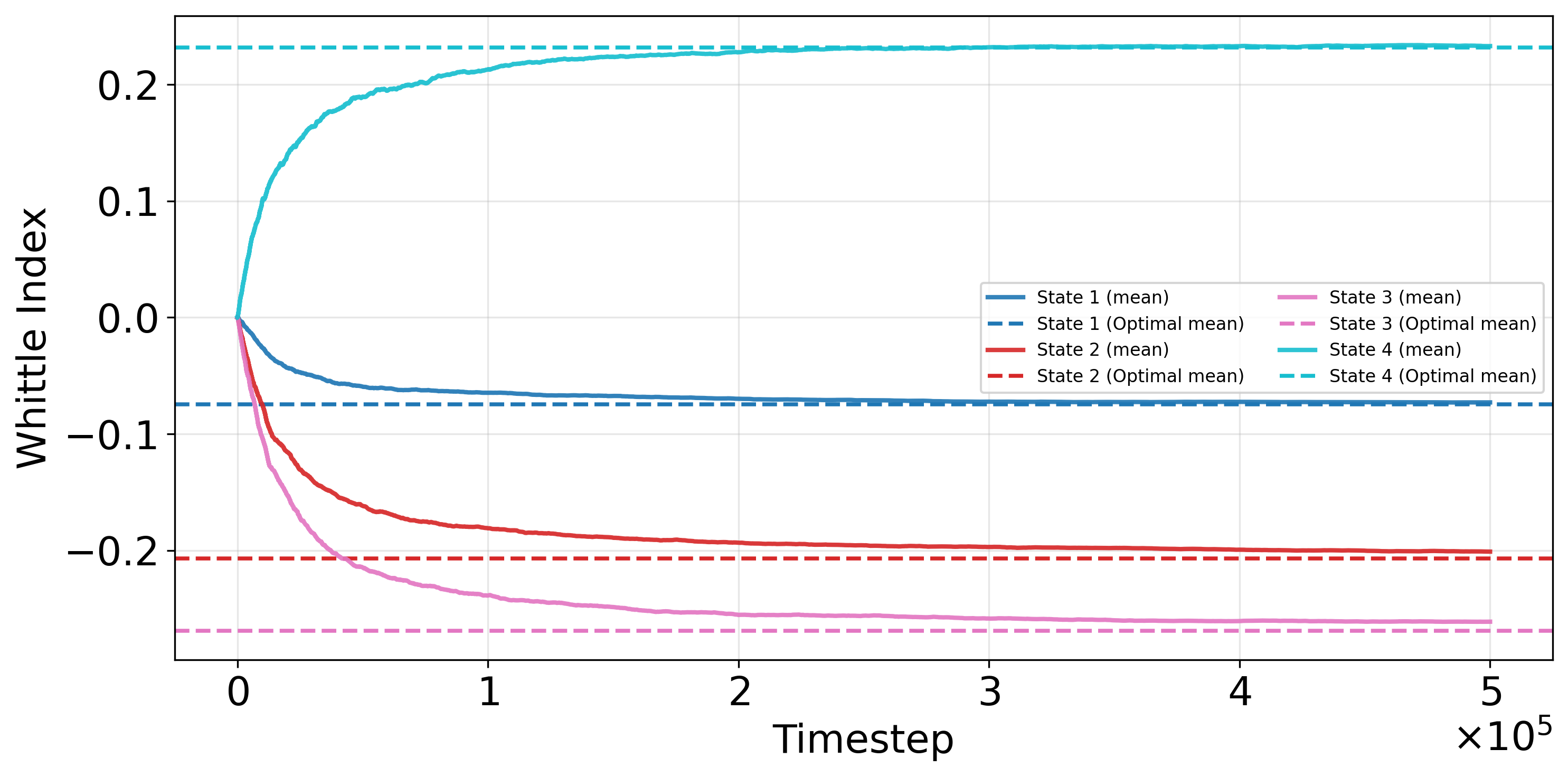}}
  \caption{Performance of synchronous QWI over $500{,}000$ iterations on the push-notification recommender task.}
  \label{fig:wide-results}
\end{figure}


We study the convergence of the synchronous QWI on the MARBLE framework on a mobile push-notification recommender system with a calibrated simulator where each user is an arm with a discrete engagement state \(s\in\{1,2,3,4\}\) (from low to high engagement). All of the implementation details are available at GitHub.\footnote{\url{https://github.com/cloud-commits/MARBLE-QWI}.}

Recommender systems personalize products, content, or services to increase engagement; on smartphones, this often occurs via push notifications, which are short, real-time messages delivered directly to the device \cite{resnick1997recommender}. Unlike pull-based recommendations, where users opt in by opening an app or visiting a site, pushes proactively interrupt, so targeting must balance engagement gains against annoyance and fatigue \cite{zhang2025robust}. 
At each decision step, the controller may activate \(M\) out of \(N\) users. Rewards capture immediate engagement (e.g., views and clicks) and depend on both the user state and an unobserved environment mode \(E_t\in\{E_1,E_2\}\) that switches according to a latent Markov chain. An unobserved latent state governs both rewards and transition dynamics, causing abrupt, exogenous nonstationarity. In other words, the latent state summarizes stochastic shocks affecting the recommender system and its users. Because arms evolve even when not pulled, the problem is restless.

We evaluate the synchronous QWI algorithm within the MARBLE framework in two configurations: homogeneous and heterogeneous. In the homogeneous case, all arms share the same transition probability and reward functions, but in the heterogeneous case, each arm has its own transition probability \(p^{\,i}_{e}(s' \mid s, a)\) and reward functions \(r^{\,i}_{e}(s,a)\). We simulate \(N=100\) users and let the recommender act on \(M=10\) users per time step. The step-size sequences are \(\alpha_k = 1/\lceil k/10000\rceil\) and \(\beta_k = \bigl(1+\lceil k \log k/10000\rceil\bigr)^{-1}\,\mathbb{I}\{k \bmod 10 = 0\}\), and we set \(\gamma = 0.8\) and \(\epsilon = 0.1\). The plots are the average of running the algorithm for 5 different seed numbers.

Fig~\ref{fig:wide-results}(a) reports the per-timestep reward averaged over users in the heterogeneous configuration. The results show that synchronous QWI converges to the oracle Whittle index policy, achieving nearly identical average rewards. Despite learning under environmental non-stationarity, the algorithm can efficiently approximate the optimal control. Fig~\ref{fig:wide-results}(b) illustrates the convergence of the learned Whittle indices for each state in the homogeneous configuration. As it is shown, over time, the learned indices gradually converge with the optimal ones, highlighting theoretical convergence guarantees. We also illustrated the performance of a random policy on selecting the Whittle indices as a baseline to compare it with synchronous QWI.

\section{Conclusion}
We introduced MARBLE, a restless bandit framework with a latent Markovian environment that induces nonstationarity, and analyzed synchronous QWI in this setting. Furthermore, we introduced the MAI criterion as a relaxed indexability assumption and proved that the synchronous QWI converges almost surely to the environment-averaged optimal Q-function and the corresponding Whittle indices under the MAI criterion, without requiring estimation of the latent state. Simulations of a calibrated simulator-embedded recommender system task corroborate the theory. Future work aims to target synchronous QWI with an erroneously calibrated simulator and asynchronous QWI.



\bibliographystyle{IEEEbib}
\bibliography{refs}

@article{whittle1988restless,
  title={Restless bandits: Activity allocation in a changing world},
  author={Whittle, Peter},
  journal={Journal of applied probability},
  volume={25},
  number={A},
  pages={287--298},
  year={1988},
  publisher={Cambridge University Press}
}

@article{robledo2024tabular,
  title={Tabular and deep learning for the whittle index},
  author={Robledo Rela{\~n}o, Francisco and Borkar, Vivek and Ayesta, Urtzi and Avrachenkov, Konstantin},
  journal={ACM Transactions on Modeling and Performance Evaluation of Computing Systems},
  volume={9},
  number={3},
  pages={1--21},
  year={2024},
  publisher={ACM New York, NY}
}

@article{avrachenkov2022whittle,
  title={Whittle index based Q-learning for restless bandits with average reward},
  author={Avrachenkov, Konstantin E and Borkar, Vivek S},
  journal={Automatica},
  volume={139},
  pages={110186},
  year={2022},
  publisher={Elsevier}
}

@article{amiri2025reinforcement,
  title={Reinforcement Learning in Switching Non-Stationary Markov Decision Processes: Algorithms and Convergence Analysis},
  author={Amiri, Mohsen and Magn{\'u}sson, Sindri},
  journal={arXiv preprint arXiv:2503.18607},
  year={2025}
}

@article{weber1990index,
  title={On an index policy for restless bandits},
  author={Weber, Richard R and Weiss, Gideon},
  journal={Journal of applied probability},
  volume={27},
  number={3},
  pages={637--648},
  year={1990},
  publisher={Cambridge University Press}
}

@book{borkar2008stochastic,
  title={Stochastic approximation: {A} dynamical systems viewpoint},
  author={Borkar, Vivek S},
  volume={9},
  year={2008},
  publisher={Springer}
}

@article{borkar1997stochastic,
  title={Stochastic approximation with two time scales},
  author={Borkar, Vivek S},
  journal={Systems \& Control Letters},
  volume={29},
  number={5},
  pages={291--294},
  year={1997},
  publisher={Elsevier}
}

@article{lakshminarayanan2017stability,
  title={A stability criterion for two timescale stochastic approximation schemes},
  author={Lakshminarayanan, Chandrashekar and Bhatnagar, Shalabh},
  journal={Automatica},
  volume={79},
  pages={108--114},
  year={2017},
  publisher={Elsevier}
}

@book{lattimore2020bandit,
  title={Bandit algorithms},
  author={Lattimore, Tor and Szepesv{\'a}ri, Csaba},
  year={2020},
  publisher={Cambridge University Press}
}

@article{guha2010approximation,
  title={Approximation algorithms for restless bandit problems},
  author={Guha, Sudipto and Munagala, Kamesh and Shi, Peng},
  journal={Journal of the ACM (JACM)},
  volume={58},
  number={1},
  pages={1--50},
  year={2010},
  publisher={ACM New York, NY, USA}
}

@article{liu2010indexability,
  title={Indexability of restless bandit problems and optimality of whittle index for dynamic multichannel access},
  author={Liu, Keqin and Zhao, Qing},
  journal={IEEE Transactions on Information Theory},
  volume={56},
  number={11},
  pages={5547--5567},
  year={2010},
  publisher={IEEE}
}

@incollection{robbins1971convergence,
  title={A convergence theorem for non negative almost supermartingales and some applications},
  author={Robbins, Herbert and Siegmund, David},
  booktitle={Optimizing methods in statistics},
  pages={233--257},
  year={1971},
  publisher={Elsevier}
}

@book{goebel1990topics,
  title={Topics in metric fixed point theory},
  author={Goebel, Kazimierz and Kirk, William A},
  volume={28},
  year={1990},
  publisher={Cambridge university press}
}

@article{papadimitriou1999complexity,
  title={The complexity of optimal queuing network control},
  author={Papadimitriou, Christos H and Tsitsiklis, John N},
  journal={Mathematics of Operations Research},
  volume={24},
  number={2},
  pages={293--305},
  year={1999},
  publisher={INFORMS}
}

@article{cheng2008extension,
  title={An extension of LaSalle's invariance principle and its application to multi-agent consensus},
  author={Cheng, Daizhan and Wang, Jinhuan and Hu, Xiaoming},
  journal={IEEE Transactions on Automatic Control},
  volume={53},
  number={7},
  pages={1765--1770},
  year={2008},
  publisher={IEEE}
}

@book{neely2010stochastic,
  title={Stochastic network optimization with application to communication and queueing systems},
  author={Neely, Michael},
  year={2010},
  publisher={Morgan \& Claypool Publishers}
}

@article{zhao2022link,
  title={Link scheduling using graph neural networks},
  author={Zhao, Zhongyuan and Verma, Gunjan and Rao, Chirag and Swami, Ananthram and Segarra, Santiago},
  journal={IEEE Transactions on Wireless Communications},
  volume={22},
  number={6},
  pages={3997--4012},
  year={2022},
  publisher={IEEE}
}

@article{burdett2023stochastic,
  title={A stochastic programming approach to perform hospital capacity assessments},
  author={Burdett, Robert L and Corry, Paul and Spratt, Belinda and Cook, David and Yarlagadda, Prasad},
  journal={Plos one},
  volume={18},
  number={11},
  pages={e0287980},
  year={2023},
  publisher={Public Library of Science San Francisco, CA USA}
}

@article{jamil2022resource,
  title={Resource allocation and task scheduling in fog computing and internet of everything environments: A taxonomy, review, and future directions},
  author={Jamil, Bushra and Ijaz, Humaira and Shojafar, Mohammad and Munir, Kashif and Buyya, Rajkumar},
  journal={ACM Computing Surveys (CSUR)},
  volume={54},
  number={11s},
  pages={1--38},
  year={2022},
  publisher={ACM New York, NY}
}

@article{akbarzadeh2022conditions,
  title={Conditions for indexability of restless bandits and an algorithm to compute Whittle index},
  author={Akbarzadeh, Nima and Mahajan, Aditya},
  journal={Advances in Applied Probability},
  volume={54},
  number={4},
  pages={1164--1192},
  year={2022},
  publisher={Cambridge University Press}
}

@book{gittins2011multi,
  title={Multi-armed bandit allocation indices},
  author={Gittins, John and Glazebrook, Kevin and Weber, Richard},
  year={2011},
  publisher={John Wiley \& Sons}
}

@incollection{duff1995q,
  title={Q-learning for bandit problems},
  author={Duff, Michael O},
  booktitle={Machine Learning Proceedings 1995},
  pages={209--217},
  year={1995},
  publisher={Elsevier}
}

@inproceedings{fu2019towards,
  title={Towards q-learning the whittle index for restless bandits},
  author={Fu, Jing and Nazarathy, Yoni and Moka, Sarat and Taylor, Peter G},
  booktitle={2019 Australian \& New Zealand Control Conference (ANZCC)},
  pages={249--254},
  year={2019},
  organization={IEEE}
}

@inproceedings{gibson2021novel,
  title={A novel implementation of Q-learning for the Whittle index},
  author={Gibson, Lachlan J and Jacko, Peter and Nazarathy, Yoni},
  booktitle={EAI international conference on performance evaluation methodologies and tools},
  pages={154--170},
  year={2021},
  organization={Springer}
}

@inproceedings{hsu2018age,
  title={Age of information: Whittle index for scheduling stochastic arrivals},
  author={Hsu, Yu-Pin},
  booktitle={2018 IEEE International Symposium on Information Theory (ISIT)},
  pages={2634--2638},
  year={2018},
  organization={IEEE}
}

@article{nino2020verification,
  title={A verification theorem for threshold-indexability of real-state discounted restless bandits},
  author={Ni{\~n}o-Mora, Jos{\'e}},
  journal={Mathematics of Operations Research},
  volume={45},
  number={2},
  pages={465--496},
  year={2020},
  publisher={INFORMS}
}

@inproceedings{pagare2023full,
  title={Full gradient deep reinforcement learning for average-reward criterion},
  author={Pagare, Tejas and Borkar, Vivek and Avrachenkov, Konstantin},
  booktitle={Learning for Dynamics and Control Conference},
  pages={235--247},
  year={2023},
  organization={PMLR}
}

@article{xiong2023finite,
  title={Finite-time analysis of whittle index based Q-learning for restless multi-armed bandits with neural network function approximation},
  author={Xiong, Guojun and Li, Jian},
  journal={Advances in Neural Information Processing Systems},
  volume={36},
  pages={29048--29073},
  year={2023}
}

@inproceedings{killian2021q,
  title={Q-learning Lagrange policies for multi-action restless bandits},
  author={Killian, Jackson A and Biswas, Arpita and Shah, Sanket and Tambe, Milind},
  booktitle={Proceedings of the 27th ACM SIGKDD Conference on Knowledge Discovery \& Data Mining},
  pages={871--881},
  year={2021}
}

@article{nakhleh2021neurwin,
  title={NeurWIN: Neural Whittle index network for restless bandits via deep RL},
  author={Nakhleh, Khaled and Ganji, Santosh and Hsieh, Ping-Chun and Hou, I and Shakkottai, Srinivas and others},
  journal={Advances in Neural Information Processing Systems},
  volume={34},
  pages={828--839},
  year={2021}
}

@inproceedings{li2022efficient,
  title={Efficient resource allocation with fairness constraints in restless multi-armed bandits},
  author={Li, Dexun and Varakantham, Pradeep},
  booktitle={Uncertainty in Artificial Intelligence},
  pages={1158--1167},
  year={2022},
  organization={PMLR}
}

@article{besbes2014stochastic,
  title={Stochastic multi-armed-bandit problem with non-stationary rewards},
  author={Besbes, Omar and Gur, Yonatan and Zeevi, Assaf},
  journal={Advances in neural information processing systems},
  volume={27},
  year={2014}
}

@article{shisher2025online,
  title={Online Learning of Whittle Indices for Restless Bandits with Non-Stationary Transition Kernels},
  author={Shisher, Md Kamran Chowdhury and Tripathi, Vishrant and Chiang, Mung and Brinton, Christopher G},
  journal={arXiv preprint arXiv:2506.18186},
  year={2025}
}

@inproceedings{gafni2022restless,
  title={Restless multi-armed bandits under exogenous global Markov process},
  author={Gafni, Tomer and Yemini, Michal and Cohen, Kobi},
  booktitle={ICASSP 2022-2022 IEEE International Conference on Acoustics, Speech and Signal Processing (ICASSP)},
  pages={5218--5222},
  year={2022},
  organization={IEEE}
}

@inproceedings{amiri2024convergence,
  title={On the convergence of td-learning on markov reward processes with hidden states},
  author={Amiri, Mohsen and Magn{\'u}sson, Sindri},
  booktitle={2024 European Control Conference (ECC)},
  pages={2097--2104},
  year={2024},
  organization={IEEE}
}

@article{resnick1997recommender,
  title={Recommender systems},
  author={Resnick, Paul and Varian, Hal R},
  journal={Communications of the ACM},
  volume={40},
  number={3},
  pages={56--58},
  year={1997},
  publisher={ACM New York, NY, USA}
}

@article{zhang2025robust,
  title={Robust recommender system: a survey and future directions},
  author={Zhang, Kaike and Cao, Qi and Sun, Fei and Wu, Yunfan and Tao, Shuchang and Shen, Huawei and Cheng, Xueqi},
  journal={ACM Computing Surveys},
  volume={58},
  number={1},
  pages={1--38},
  year={2025},
  publisher={ACM New York, NY}
}

@article{borkar2000ode,
  title={The ODE method for convergence of stochastic approximation and reinforcement learning},
  author={Borkar, Vivek S and Meyn, Sean P},
  journal={SIAM Journal on Control and Optimization},
  volume={38},
  number={2},
  pages={447--469},
  year={2000},
  publisher={SIAM}
}

\end{document}